\icmltitlerunning{Interpreting Classifiers through Attribute Interactions in Datasets}
\newcommand{\set}[1]{\ensuremath{\left\{#1\right\}}}
\newcommand{\prob}[1]{\ensuremath{P\left(#1\right)}}
\newcommand{\attr}[1]{\texttt{\small{#1}}}
\newcommand{\astrid}{\textsc{astrid}\xspace}
\newcommand{\nb}{na\"ive Bayes\xspace}
\newtheorem{definition}{Definition}
\newtheorem{lemma}{Lemma}
\newtheorem{problem}{Problem}
\newcommand{\squishlist}{
 \begin{list}{$\bullet$}
  { \setlength{\itemsep}{0pt}
     \setlength{\parsep}{3pt}
     \setlength{\topsep}{3pt}
     \setlength{\partopsep}{0pt}
     \setlength{\leftmargin}{1.5em}
     \setlength{\labelwidth}{1em}
     \setlength{\labelsep}{0.5em} } }
\newcommand{\squishend}{
  \end{list}  }
\definecolor{cbluelight}{HTML}{ccd8e4}
\begin{document}

\twocolumn[
\icmltitle{Interpreting Classifiers through Attribute Interactions in Datasets}



\begin{icmlauthorlist}
\icmlauthor{Andreas Henelius}{to}
\icmlauthor{Kai Puolam{\"a}ki}{to}
\icmlauthor{Antti Ukkonen}{to}
\end{icmlauthorlist}

\icmlaffiliation{to}{Finnish Institute of Occupational Health, Helsinki, Finland}

\icmlcorrespondingauthor{Andreas Henelius}{andreas.henelius@ttl.fi}

\icmlkeywords{attribute interactions, constrained randomisation, classifiers, interpretability}

\vskip 0.3in
]

\printAffiliationsAndNotice{}

\begin{abstract} 
In this work we present the novel \astrid method for investigating which attribute interactions classifiers exploit when making predictions. Attribute interactions in classification tasks mean that two or more attributes together provide stronger evidence for a particular class label. Knowledge of such interactions makes models more interpretable by revealing associations between attributes. This has applications, e.g., in pharmacovigilance to identify interactions between drugs or in bioinformatics to investigate associations between single nucleotide polymorphisms. We also show how the found attribute partitioning is related to a factorisation of the data generating distribution and empirically demonstrate the utility of the proposed method.
\end{abstract} 

\section{Introduction}
\label{sec:introduction}
A lot of attention has been on creating high-performing classifiers
such as, e.g., support vector machines (SVMs) \citep{cortes:1995:a}
and random forest \citep{breiman:2001:a}, both of which are among the
best-performing classifiers \citep{delgado:2014:a}. However, the
complexity of many state-of-the-art classifiers means that they are
essentially opaque, black boxes, i.e., it is very difficult to gain
insight into how the classifiers work. Gaining insight into machine
learning models is a topic that will become more important in the
future, e.g., due to possible legislative requirements
\citep{goodman:2016:a}. Interpretability of machine learning models is
a multifaceted problem, one aspect of which is post-hoc
interpretability \citep{lipton:2016:a}, i.e., gaining insight into how
the method reaches the given predictions.

Interpreting black box machine learning models in terms of
\emph{attribute interactions} provides one form of post-hoc
interpretability and is the focus of this paper. Given a supervised
classification dataset $D = \left(X, C \right)$, where $X$ is a data
matrix with $m$ predictor attributes $x_1, \ldots, x_m$ (e.g., gender,
age etc), and $C$ is a vector with a target attributes (class), an
interaction between a subset of these $m$ attributes means that the
attributes together provide stronger evidence concerning $C$ than if
the attributes are considered alone. We say that attributes interact
whenever they are \emph{conditionally dependent given the class}. We
next motivate attribute interactions from the perspective of
interpretability of real-world problems.

Two difficult problems involving interactions concern drug-drug
interactions in pharmacovigilance \citep[e.g., ][]{zhang:2017:a,
  cheng:2014:a} and investigating interactions between single
nucleotide polymorphisms (SNPs) in bioinformatics \citep[e.g.,
][]{lunetta:2004:a, moore:2010:a}. Recently, machine learning methods
have been applied to investigate drug-drug \citep{henelius:2015:gepp}
and gene-gene interactions \citep{li:2016:a}. The benefit of using
powerful classifiers, such as random forest, is that one does not need
to specify the exact form of interactions between attributes
\citep{lunetta:2004:a}, which is necessary in many traditional
statistical methods (e.g., linear regression models that include
interaction terms). To utilise classifiers in this manner for studying
associations in the data requires that we have some method for
revealing \emph{how the classifier perceives attribute interactions}.

A \emph{grouping} of the attributes in a dataset is a partition where
interacting attributes are in the same group, while non-interacting
(i.e., independent) attributes are in different groups. In this paper
we study two problems. Firstly we want to \emph{determine if a
  particular grouping of attributes represents the attribute
  interaction structure in a given dataset}. Secondly, we want to
\emph{automatically find a maximum cardinality grouping of the
  attributes in a given dataset}.

We approach these problems using the following intuition concerning
classifiers, which are used as tools to investigate interactions. A
classifier tries to model the class probabilities given the data,
i.e., the probability $\prob{C \mid X} \propto \prob{X \mid C }
\prob{C}$. Here $P \left( X \mid C \right)$ is the
\emph{class-conditional} distribution of the attributes, which we
focus on here. Formally, let $\mathcal{S}$ represent a factorisation
of $P \left( X \mid C \right)$ into independent factors, i.e.,
\begin{equation}
  \label{eq:P}
  P \left( X \mid C ; \mathcal{S}\right) =
 \prod_{S\in \mathcal{S}} \prob{X \left(\cdot, S \right) \mid C }
\end{equation}
where $X \left(\cdot, S \right) $ only contains the attributes in the
set $S$. In other words, interacting attributes are in the same group
$S \in \mathcal{S}$ and, hence, in the same factor in $P \left( X \mid
C ; \mathcal{S}\right)$.

Assume that the dataset $D$ is sampled from a factorised distribution
of the form given in Eq.~\eqref{eq:P} for some $\mathcal{S}$. Further
assume that we can generate datasets $D^\mathcal{S}$ that are
exchangeable with $D$. Suppose now that we train a classifier $f_1$
using $D$ and that we train a second classifier $f_2$ (of the same
type as $f_1$) using $D^\mathcal{S}$. Now, if classifiers $f_1$ and
$f_2$ cannot be distinguished from each other in terms of accuracy on
the same test data, it means that the factorisation $\mathcal{S}$
captures the class-dependent structure in the data to the extent
needed by the classifier.  On the other hand, if $f_2$ performs worse
than $f_1$, some essential relationships in the data needed by the
classifier are no longer present, i.e., $D$ has not been sampled from
a distribution of the form given by Eq.~\eqref{eq:P}. To determine
whether $f_1$ and $f_2$ are indistinguishable, we compute a confidence
interval (CI) for the performance of $f_2$ by generating an ensemble
of datasets $D^\mathcal{S}$. If the performance of $f_1$ is above the
CI we conclude that the factorisation $\mathcal{S}$ is not valid.

\subsection{Related Work}
In this paper we combine the probabilistic approach of
\citet{ojala2010jmlr} studying whether a classifier utilises attribute
interactions at all with the method of \citet{henelius:2014:peek}
allowing identification of groups of interacting attributes. For a
review on attribute interactions in data mining see, e.g.,
\citet{freitas:2001:a}. Interactions have been considered in feature
selection \citep{zhao:2007:a, zhao:2009:a}. \citet{mampaey:2013:a}
partition attributes by a greedy hierarchical clustering algorithm
based on Minimum Description Length (MDL). Their goal is similar to
our, but we focus on supervised learning. \citet{tatti:2011:a} ordered
attributes according to their dependencies while
\citet{jakulin:2003:a} quantified the degree of attribute interaction
and \citet{jakulin:2004:a} factorised the joint data distribution and
presented a method for significance testing of attribute interactions.


\subsection{Contributions}
We present and study the two problems of (i) assessing whether a
particular grouping of attributes represents the class-conditional
structure of a dataset (Sec. \ref{sec:1group}) and (ii) automatically
discovering the attribute grouping of highest granularity
(Sec. \ref{sec:automatically}). We empirically demonstrate using
synthetic and real data how the proposed \astrid \footnote{R-package
  available:\url{https://github.com/bwrc/astrid-r}}
(\textsc{a}utomatic \textsc{str}ucture \textsc{id}entification) method
finds attribute interactions in data
(Secs. \ref{sec:experiments}--\ref{sec:discussion}).


\section{Methods}
In this section we consider (i) how to determine if a particular
attribute grouping is a valid factorisation of the class-conditional
joint distribution, and (ii) automatically finding the maximum
cardinality attribute grouping.

\subsection{Preliminaries}
Let $X$ be an $n \times m$ data matrix, where $X(i,\cdot)$ denotes the
$i$th row (item), $X(\cdot,j)$ the $j$th column (attribute) of $X$,
and $X(\cdot,S)$ the columns of $X$ given by $S$, where $S\subseteq
[m]=\{1,\ldots,m\}$, respectively.  Let $\mathcal{C}$ be a finite set
of class labels and let $C$ be an $n$-vector of class labels, such
that $C\left(i\right)$ gives the class label for $X(i,\cdot)$. We
denote a dataset $D$ by the tuple $D=\left(X,C\right)$.

We denote by $\mathcal{P}$ the set of disjoint partitions of
$[m]=\{1,\ldots,m\}$, where a partition $\mathcal{S}\in\mathcal{P}$
satisfies $\cup_{S\in\mathcal{S}}{S}=[m]$ and for all $S,S'\in\mathcal{S}$
either $S=S'$ or $S\cap S'=\emptyset$, respectively.

Here we assume that the dataset has been sampled i.i.d., i.e., the dataset
$D$ follows a joint probability distribution given by
\begin{equation}
  \label{eq:jdfull}
  \begin{array}{lcl}
\prob{D} &=&
\prod_{i\in[n]}{P(X\left(i,\cdot\right),C\left(i\right))}\\
&=&\overbrace{\prod_{i \in \left\lbrack n \right\rbrack} \prob{X\left(i,\cdot \right) \mid C(i)}}^{\prob{X  \mid C}} \prob{C \left( i \right)},
  \end{array}
\end{equation}
where $\prob{X\mid C}$ is the \emph{class-conditional
  distribution}. We consider a factorisation of $\prob{D}$ into
class-conditional factors given by the grouping $\mathcal{S}\in\mathcal{P}$ and write
\begin{equation}
\label{eq:jdfact}
  \prob{D} = \overbrace{\prod_{i \in \left\lbrack n \right\rbrack} \prod_{S\in \mathcal{S}} \prob{X \left(i, S \right) \mid C \left( i \right)}}^{\prod_{S\in\mathcal{S}}{\prob{X(\cdot,S)  \mid C}}} \prob{C \left( i \right)} .
\end{equation}
Given an observed dataset $D$, we want to find the attribute
associations in the data
and ask: \emph{Has the observed dataset $D$ been sampled from a
  distribution given by Eq.~\eqref{eq:jdfact} with the grouping given
  by $\mathcal{S}\in\mathcal{P}$?}

\subsection{Framework for Investigating Factorisations}
\label{sec:1group}
Our goal is to determine whether the data obeys the factorised
distribution of Eq.~\eqref{eq:jdfact}. To do this we compare the
accuracy of a classifier trained using the original data with the
confidence interval (CI) formed from the accuracies of a collection of
classifiers trained using permuted data. The permuted datasets are
formed such that they are exchangeable with the original dataset if
Eq.~\eqref{eq:jdfact} holds. If the accuracy of the original data is
above the CI we can conclude with high confidence that the data does
not obey the factorised distribution.

We denote a classifier trained using the dataset $D$ by $f_D$. Further
assume that we have a separate independent test dataset from the same
distribution as $D$, denoted by
$D_\mathrm{test}=\left(X_\mathrm{test},C_\mathrm{test}\right)$.
\begin{definition}\emph{Classification Accuracy}
  \label{def:teststatistic}
  Given the above definitions, the accuracy for a classifier trained
  using $D$ is given by
  \begin{equation}
    \label{eq:teststatistic}
    T\left(D\right)=\frac 1{n_\mathrm{test}}
    \sum_{i=1}^{n_\mathrm{test}}{ I\left\lbrack
      f_D\left(X_\mathrm{test}\left(i,\cdot\right)\right)=C_\mathrm{test}\left(i\right)
      \right\rbrack},
  \end{equation}
  where $I\left\lbrack \Box \right\rbrack$ is the indicator function
  and $n_\mathrm{test}$ is the number of items in the test dataset.
\end{definition}
Note that $T$ is not the accuracy of $f$ on $D$, but the accuracy of
$f$ on $X_\mathrm{test}$ when $f$ is trained using $D$. Because direct
sampling from Eq.~\eqref{eq:jdfact} is not possible as the data
generating model is unknown, we generate
the permuted data matrices $X^\mathcal{S}$ (defined below)
so that they have
same probability as $X$ \emph{under the assumption that $X$ is a
  sample from a factorised distribution as given in
  Eq.~\eqref{eq:jdfact}.}  This means that $X$ and $X^\mathcal{S}$ are
\emph{exchangeable} under the assumption of a joint distribution that
is factorised in terms of $\mathcal{S}$.

We sample datasets using the permutation scheme described in
\citet{henelius:2014:peek}. A new permuted dataset
$D^\mathcal{S}=\left(X^\mathcal{S},C\right)$ is created by permuting
the data matrix of the dataset $D=\left(X,C\right)$ at random. The
permutation is defined by $m$ bijective permutation functions $\pi_j:
[n]\mapsto [n]$ sampled uniformly at random from the set of allowed
permutations functions. The new data matrix is then given by
$X^\mathcal{S}\left(i,j\right)=X
\left(\pi_j\left(i\right),j\right)$. The allowed permutation functions
satisfy the following constraints for all $i\in [n]$, $j,j'\in[m]$,
and $S\in\mathcal{S}$:
\begin{enumerate}
\item permutations are within-a class, i.e.,
  $C\left(i\right)=C\left(\pi_j\left(i\right)\right)$, and
  \item items within a group are permuted together, i.e., $j\in
    S\wedge j'\in S\implies
    \pi_j\left(i\right)=\pi_{j'}\left(i\right)$.
\end{enumerate}

Let $\mathcal{D}_\mathcal{S}$ be the set of datasets that can be
generated by the above permutation scheme using the grouping
$\mathcal{S}$. We note:
\begin{lemma}\label{lem:g1}
  Each invocation of the permutation scheme produces each of the
  datasets in $\mathcal{D}_\mathcal{S}$ with uniform probability.
\end{lemma}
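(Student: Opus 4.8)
The plan is to recognise the permutation scheme as uniform sampling from a finite group acting on the data, and then to invoke the orbit--stabiliser theorem. First I would strip the redundancy out of the description: constraint~2 forces all $\pi_j$ with $j$ in the same group $S\in\mathcal{S}$ to be equal, so a legal choice of the $m$ functions is nothing but a choice of one permutation $\pi_S$ per group $S$, and constraint~1 forces each $\pi_S$ to be \emph{class-preserving}. The class-preserving permutations of $[n]$ form a subgroup $G_0=\prod_{c\in\mathcal{C}}\mathrm{Sym}(\{i:C(i)=c\})$ of the symmetric group, so the set of admissible tuples is exactly the finite group $G=\prod_{S\in\mathcal{S}}G_0$. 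A single invocation of the scheme draws each $\pi_S$ independently and uniformly from $G_0$, i.e.\ it draws one element $g=(\pi_S)_{S\in\mathcal{S}}$ uniformly at random from $G$.

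Next I would check that $G$ genuinely acts on data matrices. Writing $S(j)$ for the group containing column $j$, the scheme sends $X$ to the matrix $(g\cdot X)(i,j)=X(\pi_{S(j)}(i),j)$, and a direct substitution shows that applying $h=(\sigma_S)$ and then $g=(\pi_S)$ composes the coordinates as $\pi_{S(j)}\circ\sigma_{S(j)}$ in each column, so this is a (right) action of $G$ once the product on $G$ is taken columnwise; the exact left/right convention is immaterial for what follows. Because every admissible permutation is class-preserving, $C$ is never altered, so I may identify each generated dataset with its matrix and regard $G$ as acting on datasets with $D$ fixed. By definition $\mathcal{D}_\mathcal{S}$ is the set of all matrices reachable this way, i.e.\ precisely the orbit $G\cdot D$.

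Finally I would apply the orbit--stabiliser theorem. Fix any target $D'\in\mathcal{D}_\mathcal{S}$ and any $g_0\in G$ with $g_0\cdot D=D'$. Then $\{g\in G:g\cdot D=D'\}=g_0\,\mathrm{Stab}(D)$ is a coset of the stabiliser of $D$, hence has cardinality $|\mathrm{Stab}(D)|$, which does \emph{not} depend on $D'$. Since $g$ is uniform on $G$, the probability that the scheme outputs $D'$ equals $|\mathrm{Stab}(D)|/|G|=1/|G\cdot D|=1/|\mathcal{D}_\mathcal{S}|$, the same for every element of $\mathcal{D}_\mathcal{S}$, which is the claim.

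The one place that needs care -- and the reason the statement is not simply ``$|G|$ tuples give $|G|$ datasets'' -- is that the map $g\mapsto g\cdot D$ is generally \emph{not} injective: if a column takes the same value on two same-class rows, distinct tuples collide to the same matrix. The main work is therefore the collision count, and the orbit--stabiliser theorem is exactly the tool that makes it uniform, guaranteeing that every reachable dataset has the identical number $|\mathrm{Stab}(D)|$ of preimages regardless of how many repeated values $D$ contains.
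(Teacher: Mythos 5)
Your proof is correct, and it supplies substantially more than the paper does: the paper's entire argument for Lemmas~\ref{lem:g1} and~\ref{lem:g2} is the single sentence that they ``follow directly from the definition of the permutation and the probability distribution of Eq.~(3),'' so there is no detailed argument to compare against. Your orbit--stabiliser route is the right way to make the claim rigorous, and you correctly isolate the one point where ``directly from the definition'' is actually hiding work: the map from admissible permutation tuples $g=(\pi_S)_{S\in\mathcal{S}}$ to generated datasets is not injective whenever a column repeats a value across two same-class rows, so uniformity over tuples does not trivially give uniformity over datasets --- one needs the fibres $\{g: g\cdot D = D'\}$ to all have the same size, which is exactly what the coset/stabiliser argument delivers. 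Your reduction of the admissible set to the product group $\prod_{S\in\mathcal{S}}\prod_{c\in\mathcal{C}}\mathrm{Sym}(\{i: C(i)=c\})$ is also the correct reading of the two constraints. The only cosmetic quibble is the composition order in your verification of the action: applying $h=(\sigma_S)$ and then $g=(\pi_S)$ yields row index $\sigma_{S(j)}(\pi_{S(j)}(i))$, i.e.\ $\sigma\circ\pi$ rather than $\pi\circ\sigma$, but as you note this only affects whether one calls it a left or right action and has no bearing on the orbit--stabiliser conclusion. In short: your proof is a valid and genuinely more careful argument than the one-line justification in the paper, and the collision-counting issue you flag is precisely the gap the paper's terse proof glosses over.
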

\begin{lemma}\label{lem:g2}
  The datasets in $\mathcal{D}_\mathcal{S}$ have equal probability under the
  distribution of Eq.~\eqref{eq:jdfact}, parametrised by $\mathcal{S}$.
\end{lemma}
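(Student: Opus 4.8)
The plan is to show that applying any allowed permutation to the observed $D$ leaves the value of $\prob{D}$ given by Eq.~\eqref{eq:jdfact} unchanged. Since every element of $\mathcal{D}_\mathcal{S}$ arises from the same $D$ by some allowed permutation (including the identity, so $D\in\mathcal{D}_\mathcal{S}$), they will all share the common value $\prob{D}$ and hence be equal to one another.

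First I would note that the class vector $C$ is identical in $D$ and $D^\mathcal{S}$, so the factor $\prod_{i\in[n]}\prob{C(i)}$ is the same for both and only the class-conditional factors can differ. Next, using constraint~(2), all columns $j\in S$ of a given group are permuted by a common permutation, so I can define a single $\sigma_S:=\pi_j$ for any $j\in S$ and write $X^\mathcal{S}(i,S)=X(\sigma_S(i),S)$. I would then reorganise the double product in Eq.~\eqref{eq:jdfact}, moving the product over items inside the product over groups, which reduces the claim to showing, for each $S\in\mathcal{S}$, that $\prod_{i\in[n]}\prob{X(\sigma_S(i),S)\mid C(i)}=\prod_{i\in[n]}\prob{X(i,S)\mid C(i)}$.

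The crucial step is the change of index $i'=\sigma_S(i)$. Because $\sigma_S$ is a bijection on $[n]$, it merely reorders the factors of the (finite, nonnegative) product; and because constraint~(1) forces $\sigma_S$ to be within-class, we have $C(\sigma_S^{-1}(i'))=C(i')$, so the conditioning label is preserved under relabelling. Hence the reindexed product equals $\prod_{i'\in[n]}\prob{X(i',S)\mid C(i')}$, i.e.\ exactly the original factor for that group. Taking the product over all $S\in\mathcal{S}$ and reattaching the unchanged class factors yields $\prob{D^\mathcal{S}}=\prob{D}$.

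I expect the main obstacle to be stating the index change cleanly: one must make explicit that $\sigma_S$ respecting class labels is precisely what guarantees that $C$ and $C\circ\sigma_S^{-1}$ agree, so that the per-item conditional distributions in the permuted product line up term-by-term with those in the original. Everything else — splitting and reordering the product and cancelling the class factors — is routine commutativity of a finite product of nonnegative reals.
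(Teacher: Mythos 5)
Your argument is correct and is exactly the argument the paper has in mind: the paper's proof simply states that the claim ``follows directly from the definition of the permutation and the probability distribution of Eq.~\eqref{eq:jdfact},'' and your reindexing $i'=\sigma_S(i)$ within each group factor, using constraint~(2) to get a single per-group permutation and constraint~(1) to preserve the conditioning label, is the precise content of that assertion. No gaps; you have merely spelled out what the authors leave implicit.
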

\begin{proof}
  The proofs follow directly from the definition of the permutation
  and the probability distribution of Eq.~\eqref{eq:jdfact}.
\end{proof}
\begin{definition}\emph{Confidence intervals}
\label{def:ci}
  Given a dataset $D$, a grouping $\mathcal{S}$, a classifier $f$ and
  an integer $R$, let $A = \set{T\left( D_1^\mathcal{S} \right),
    \ldots, T\left( D_R^\mathcal{S} \right)}$ be a vector of
  accuracies where the datasets $D_i^\mathcal{S}$ are obtained by the
  permutation parametrised by $\mathcal{S}$, and $T$ is as in
  Eq.~\eqref{eq:teststatistic}. The CI is the tuple $ C =
  \left(c_\textrm{lower}, c_\textrm{upper} \right)$, where
  $c_\textrm{lower}$ and $c_\textrm{upper}$ are values corresponding
  to the 5\% and 95\% quantiles in $A$, respectively.
\end{definition}
We cast the above discussion as a problem:
\begin{problem}
\label{prob:structuretest}
Given an observed dataset $D$, a grouping $\mathcal{S}$ and a
classifier $f$, let $a_0$ be the accuracy of $f$ (trained using the
original data) on the test set. Determine if the upper end of the CI
of Def.~\ref{def:ci} for the accuracy of a classifier trained using
factorised data is at least $a_0$.
\end{problem}
If the above condition is met, we conclude that the factorisation
correctly captures the structure of the data.

\subsection{Automatically Finding Groupings (ASTRID)}
\label{sec:automatically}
In the previous section we examined whether a \emph{particular
  grouping} $\mathcal{S}$ describes the structure of the data in terms
of the factorisation in Eq.~\eqref{eq:jdfact}. A natural step is now
to ask \emph{how to find the grouping best describing the associations
  in a dataset $D$}? Here we choose \emph{best} to be the grouping
$\mathcal{S}$ of (i) maximum cardinality such that (ii) a classifier
trained using data shuffled with $\mathcal{S}$ is indistinguishable in
terms of accuracy from a classifier trained using the original,
unfactorised data.

Finding the maximum cardinality grouping is motivated by the fact that
in this case there are no irrelevant interactions. Also, interpreting
attribute interactions in small groups is easier than in large
groups. The requirement on accuracy means that no essential
information is lost and in practice this means that the upper end of
the CI for the accuracy of the classifier $f$ trained using
$D^\mathcal{S}$ is at least as large as the original accuracy $a_0$ of
$f$ trained using $D$.

Exhaustive search of all groupings is in general impossible due to the
size of the search space. Hence, to make our problem tractable we
assume that accuracy decreases approximately monotonically with
respect to breaking of groups in the correct solution, i.e., the more
the interactions are broken, the more classification performance
decreases. Using this property we use a \emph{top-down greedy
  algorithm} termed \astrid. For details see the extended description
in \citet{henelius:2016:astrid}. In practice, $T$ in
Eq.~\eqref{eq:teststatistic} is susceptible to stochastic variation
and for stability we instead use \emph{expected accuracy} $V$ when
optimising accuracy in the greedy algorithm:
\begin{equation}\label{eq:that}
  V \left(\mathcal{S} \right)=\frac 1{N}\sum_{i=1}^{N}T\left(D^\mathcal{S}_i\right),
\end{equation}
where $N$ is the number of samples used to calculate the expectation,
$D^\mathcal{S}_i$ ($i\in [N]$) is a dataset generated by the
permutation parametrised by $\mathcal{S}$ and $T$ is defined as in
Eq.~\eqref{eq:teststatistic}.

\section{Experiments}
\label{sec:experiments}
We use \astrid to identify attribute interactions. We use a synthetic
dataset and 11 datasets from the UCI machine learning repository
\cite{bache:2014:a}\footnote{Datasets obtained from
  \url{http://www.cs.waikato.ac.nz/ml/weka/datasets.html}}. All
experiments were run in R \citep{R:2015:a} and our method is released
as the \astrid R-package, available for
download\footnote{\url{https://github.com/bwrc/astrid-r} (R-package
  and source code for experiments)}. We use a value of $R = 250$ in
Def.~\ref{def:ci} and $N= 100$ in Eq.~\eqref{eq:that}. In all
experiments the dataset was randomly split as follows: 50\% for
training ($D$) and the rest for testing ($D_{\mathrm{test}}$, see
Eq.~\eqref{eq:teststatistic}): 25\% for computing $V$
(Eq.~\eqref{eq:that}), and 25\% for computing CIs. As classifiers we
use support vector machines (SVM) with RBF kernel, random forest (RF)
and \nb (NB).

The datasets are summarised in Table~\ref{tab:datasets}. The
\textbf{UCI datasets} were chosen so that the SVM and random forest
classifiers achieve reasonably good accuracy at default settings,
since the goal here is to demonstrate the applicability of the method
rather than optimise classifier performance. Rows with missing values
and constant-value columns were removed from the UCI datasets. The
\textbf{synthetic dataset} has two classes, each with 500 data
points. Attributes $1$ and $2$ carry meaningful class information only
when considered jointly, attribute $3$ contains some class information
and attribute $4$ is random noise. The correct grouping is hence
$\mathcal{S} = \set{\set{1,2}, \set{3}, \set{4}}$.

\begin{table}[t!]
\setlength{\tabcolsep}{1.2ex} 
\centering
\caption{The datasets used in the experiments (2--10 from
  UCI). Columns as follows: Number of items (Ni) after removal of rows
  with missing values, number of classes (Nc) after removal of
  constant-value columns, number of attributes (Na). MCP is major
  class proportion. \textbf{T$_\mathrm{\textbf{SVM}}$} and
  \textbf{T$_\mathrm{\textbf{RF}}$} give the computation in minutes of the \astrid method
  for the SVM and random forest, respectively.}
\label{tab:datasets}
\tiny{
      \begin{tabularx}{\columnwidth}{clccccrr}
  \toprule
  \textbf{n} & \textbf{Dataset} & \textbf{Ni} & \textbf{Nc} & \textbf{Na} & \textbf{MCP} & \textbf{T$_\mathrm{\textbf{SVM}}$} & \textbf{T$_\mathrm{\textbf{RF}}$} \\ 
  \midrule
  1  & \texttt{synthetic} & 1000 & 2 & 4 & 0.50 & 0.1 & 0.4 \\
  2  & \texttt{balance-scale} & 625 & 3 & 4 & 0.46 & 0.1 & 0.3 \\ 
  3  & \texttt{diabetes} & 768 & 2 & 8 & 0.65 & 0.2 & 1.1 \\ 
  4  & \texttt{vowel} & 990 & 11 & 13 & 0.09 & 1.2 & 56.1 \\ 
  5  & \texttt{credit-a} & 653 & 2 & 15 & 0.55 & 0.8 & 3.5 \\ 
  6  & \texttt{vote} & 232 & 2 & 16 & 0.53 & 0.6 & 0.9 \\ 
  7  & \texttt{segment} & 2310 & 7 & 18 & 0.14 & 3.7 & 14.2 \\ 
  8  & \texttt{vehicle} & 846 & 4 & 18 & 0.26 & 1.5 & 6.8 \\ 
  9  & \texttt{mushroom} & 5644 & 2 & 21 & 0.62 & 13.1 & 19.8 \\ 
  10 & \texttt{soybean} & 682 & 19 & 35 & 0.13 & 9.1 & 29.5 \\ 
  11 & \texttt{kr-vs-kp} & 3196 & 2 & 36 & 0.52 & 42.2 & 41.7 \\ 
  \bottomrule
      \end{tabularx}
}
\end{table}


\section{Results}
\label{sec:results}
The results are presented as tables where each row is a grouping and
the columns represent attributes. Attributes belonging to the same
group are marked with the same letter, i.e., attributes marked with
the same letter on the same row are interacting.

Table~\ref{res:tab:synthetic} shows the results for the synthetic
dataset where the highest-cardinality grouping is highlighted and is
also shown below the table. Using the SVM and RF classifiers \astrid
identifies the correct attribute interaction structure ($k = 3$). For
$k=4$ the accuracy is clearly lower. For \nb all groupings (all values
of $k$) are equally valid since the classifier assumes attribute
independence. The results mean that the average accuracy of an SVM or
RF classifier trained on the synthetic dataset permuted using
$\mathcal{S} = \set{\set{1,2}, \set{3}, \set{4}}$ is within
CIs. \astrid reveals the factorised form of the joint distribution of
the data, which makes it possible to identify the attribute
interaction structure exploited by the classifier in the
datasets. This makes the models more interpretable and we, e.g., learn
that NB does not exploit interactions (as expected!).

The groupings for the UCI datasets are summarised in
Table~\ref{res:tab:uci}. SVM and RF are in general similar in terms of
the cardinality ($k$), with the exception of \texttt{kr-vs-kp} and
\texttt{soybean}. In many cases it appears that the classifiers
utilise few interactions in the UCI datasets. To compare this finding
with the results of \citet{ojala2010jmlr}, we calculated the value of
their Test~2, denoted $p_\mathrm{OG}$ in Table~\ref{res:tab:uci}. This
test investigates whether a classifier utilises attribute
interactions. $p_\mathrm{OG} \geq 0.05$ indicates that no attribute
interactions are used by the classifier, which we find for
\texttt{diabetes} and \texttt{soybean} for SVM and for
\texttt{diabetes} and \texttt{credit-a} for random forest (highlighted
in the table). This is in line with the findings from \astrid, since
for these datasets $k$ equals $N$ in Table~\ref{res:tab:uci} and no
interactions are hence utilised as the dataset can be factorised into
singleton groups.

Finally, as an illustrative example of grouping attributes exploited
by a classifier we consider the \texttt{vote} dataset. This dataset
contains yes/no information on 16 issues with the target of
classifying if a person is republican or democrat. Using SVM \astrid
finds that the maximum cardinality grouping is of size $k = 8$
(Tab.~\ref{res:tab:uci}). The grouping consists of {\bf 7 singleton
  attributes} (\attr{water\hyp project\hyp cost\hyp sharing},
\attr{synfuels\hyp corporation\hyp cutback}, \attr{physician\hyp
  fee\hyp freeze}, \attr{education\hyp spending}, \attr{duty\hyp
  free\hyp exports}, \attr{export\hyp administration\hyp act\hyp
  south\hyp africa}, \attr{immigration}) and {\bf one group with 9
  interacting attributes} (\attr{crime}, \attr{handicapped\hyp
  infants}, \attr{religious\hyp groups\hyp in\hyp school},
\attr{superfund\hyp right\hyp to\hyp sue}, \attr{adoption\hyp of\hyp
  the\hyp budget\hyp resolution}, \attr{mx\hyp missile},
\attr{anti\hyp satellite\hyp test\hyp ban}, \attr{aid\hyp to\hyp
  nicaraguan\hyp contras}, \attr{el\hyp salvador\hyp aid}). It appears
that the 9 attributes in the group roughly represent military and
foreign policy issues, and economic and social issues. This means,
that the SVM exploits relations between these 9 political issues when
classifying persons into republicans or democrats. On the other hand,
the singleton attributes seem to mostly represent domestic economic,
economic and export issues. The classifier does not use any singleton
attribute jointly with any other attribute when making predictions.

Note that \astrid is a randomised algorithm and the found groupings
are hence not necessarily unique. The stability of the results depends
on factors such as the used classifier, the size of the data and the
strength of the interactions. Also, the results are affected by the
number of random samples ($R$ in Def.~\ref{def:ci} and $N$ in
Eq.~\eqref{eq:that}) and for practical applications a trade-off
between accuracy and speed must be made.

\begin{table}[t!]
\setlength{\tabcolsep}{0.2ex} 
  \centering
\caption{The \texttt{synthetic} dataset. The cardinality of the
  grouping is $k$ and CI is the confidence interval for
  accuracy. Original accuracy using unshuffled data ($a_0$) and the
  final grouping ($\mathcal{S}$, highlighted row) shown above and
  below the table, respectively. An asterisk ($*$) denotes that the
  factorisation is valid.}
\label{res:tab:synthetic}
\begin{subtable}{0.49\columnwidth}
\centering
\caption{SVM}
\label{res:tab:synthetic:svm}
\scriptsize{
\begin{tabular}{ccccccc} 
& \multicolumn{6}{l}{$a_0 = 0.908$}\\
\toprule 
\textbf{k} & \textbf{CI} & & \rotatebox{90}{a3} & \rotatebox{90}{a4} & \rotatebox{90}{a2} & \rotatebox{90}{a1}\\ 
\cmidrule(){1-3} 
\cmidrule(l){4-7} 
2 & [0.900,\;0.920] & * & (A) & (B & B & B)\\ 
\rowcolor{cbluelight}
3 & [0.896,\;0.920] & * & (A) & (B) & (C & C)\\ 
4 & [0.696,\;0.784] &  & (A) & (B) & (C) & (D)\\ 
\bottomrule 
\end{tabular} 
}
$ \mathcal{S} = \set{\set{1,2}, \set{3}, \set{4}}$%
\end{subtable}%
\hspace*{2ex}%
\begin{subtable}{0.49\columnwidth}
\centering
\caption{Random forest}
\label{res:tab:synthetic:rf}
\scriptsize{
\begin{tabular}{ccccccc} 
& \multicolumn{6}{l}{$a_0 = 0.904$}\\
\toprule 
\textbf{k} & \textbf{CI} & & \rotatebox{90}{a3} & \rotatebox{90}{a4} & \rotatebox{90}{a1} & \rotatebox{90}{a2}\\ 
\cmidrule(){1-3} 
\cmidrule(l){4-7} 
2 & [0.896,\;0.928] & * & (A) & (B & B & B)\\ 
\rowcolor{cbluelight}
3 & [0.896,\;0.928] & * & (A) & (B) & (C & C)\\ 
4 & [0.668,\;0.756] &  & (A) & (B) & (C) & (D)\\ 
\bottomrule 
\end{tabular} 
  $\mathcal{S} = \set{\set{1,2}, \set{3}, \set{4}}$
}
\end{subtable}
\begin{subtable}{0.33\textwidth}
  \centering
\caption{Na\"ive Bayes}
\label{res:tab:synthetic:nb}
\scriptsize{
\begin{tabular}{ccccccc} 
& \multicolumn{6}{l}{$a_0 = 0.760$}\\
\toprule 
\textbf{k} & \textbf{CI} & & \rotatebox{90}{a1} & \rotatebox{90}{a2} & \rotatebox{90}{a3} & \rotatebox{90}{a4}\\ 
\cmidrule(){1-3} 
\cmidrule(l){4-7} 
2 & [0.760,\;0.760] & * & (A) & (B & B & B)\\ 
3 & [0.760,\;0.760] & * & (A) & (B) & (C & C)\\ 
\rowcolor{cbluelight}
4 & [0.760,\;0.760] & * & (A) & (B) & (C) & (D)\\ 
\bottomrule 
\end{tabular} 
  $\mathcal{S} = \set{\set{1}, \set{2}, \set{3}, \set{4}}$
}
\end{subtable}
\end{table}

\begin{table}[t!]
 \setlength{\tabcolsep}{0.6ex} 
  \centering
\caption{Groupings for UCI datasets. Columns as follows: number of
  attributes in the dataset (\emph{N}), size of the grouping
  (\emph{k}), size of the largest ($N_1$) and second-largest ($N_2$)
  groups, baseline accuracy for the classifier trained with unshuffled
  data (a${}_0$) and the CI. $p_\textrm{OG}$ is the $p$-value of Test
  2 in \citet{ojala2010jmlr} ($p \geq 0.05$ highlighted).}
\label{res:tab:uci} 
\small{
    \begin{tabular}{lcc cccccl}
      \toprule

  \textbf{Dataset}
  & \multicolumn{1}{c}{\textbf{N}}
&
    & \multicolumn{1}{c}{\textbf{k}}
    & \multicolumn{1}{c}{\textbf{N${}_1$}}
    & \multicolumn{1}{c}{\textbf{N$_2$}}
    & \multicolumn{1}{c}{\textbf{a${}_0$}}
    & \multicolumn{1}{c}{\textbf{CI}}
    & \multicolumn{1}{c}{\textbf{p}$_\textrm{OG}$}
\\ \midrule

& & \multicolumn{7}{c}{\textbf{SVM}}\\
\cmidrule(l){3-9}
\textbf{ balance-scale } &  4 & &3 & 2 & 1 & 0.891 & [0.821, 0.897] & 0.03\\
\textbf{ credit-a } &  15 & &12 & 4 & 1 & 0.871 & [0.847, 0.871] & 0.04\\
\textbf{ diabetes } &  8 & &8 & 1 & 1 & 0.714 & [0.688, 0.740] & \cellcolor{cbluelight} 0.59\\
\textbf{ kr-vs-kp } &  36 & &33 & 4 & 1 & 0.917 & [0.922, 0.924] & 0.00\\
\textbf{ mushroom } &  21 & &15 & 7 & 1 & 0.995 & [0.991, 0.995] & 0.00\\
\textbf{ segment } &  18 & &3 & 16 & 1 & 0.948 & [0.936, 0.948] & 0.00\\
\textbf{ soybean } &  35 & &35 & 1 & 1 & 0.844 & [0.820, 0.850] & \cellcolor{cbluelight} 0.26\\
\textbf{ vehicle } &  18 & &3 & 15 & 2 & 0.767 & [0.719, 0.781] & 0.00\\
\textbf{ vote } &  16 & &8 & 9 & 1 & 0.931 & [0.897, 0.931] & 0.00\\
\textbf{ vowel } &  13 & &3 & 11 & 1 & 0.806 & [0.760, 0.806] & 0.00\\

\multicolumn{9}{c}{}\\
& & \multicolumn{7}{c}{\textbf{random forest}}\\
\cmidrule(l){3-9}
\textbf{ balance-scale } &  4 & &3 & 2 & 1 & 0.821 & [0.731, 0.833] & 0.02\\
\textbf{ credit-a } &  15 & &15 & 1 & 1 & 0.877 & [0.847, 0.883] & \cellcolor{cbluelight} 0.19\\
\textbf{ diabetes } &  8 & &8 & 1 & 1 & 0.703 & [0.698, 0.740] & \cellcolor{cbluelight} 0.89\\
\textbf{ kr-vs-kp } &  36 & &16 & 21 & 1 & 0.982 & [0.972, 0.982] & 0.00\\
\textbf{ mushroom } &  21 & &14 & 8 & 1 & 1.000 & [0.996, 1.000] & 0.00\\
\textbf{ segment } &  18 & &4 & 15 & 1 & 0.986 & [0.979, 0.986] & 0.00\\
\textbf{ soybean } &  35 & &24 & 12 & 1 & 0.964 & [0.946, 0.964] & 0.00\\
\textbf{ vehicle } &  18 & &3 & 13 & 4 & 0.752 & [0.710, 0.757] & 0.00\\
\textbf{ vote } &  16 & &10 & 7 & 1 & 0.948 & [0.897, 0.948] & 0.00\\
\textbf{ vowel } &  13 & &3 & 11 & 1 & 0.917 & [0.901, 0.917] & 0.00\\

  \bottomrule
    \end{tabular}
}
\end{table}



\section{Discussion and Conclusion}
\label{sec:discussion}
Interpreting black box machine learning models is an important
emerging topic in data mining and in this paper we present the \astrid
method for investigating classifiers. This method provides insight
into generic, opaque classifier by revealing how the attributes are
interacting. \astrid automatically finds in polynomial time the
maximum cardinality grouping such that the accuracy of a classifier
trained using the factorised data cannot be distinguished (in terms of
confidence intervals) from a classifier trained using the original
data. The method makes no assumptions on the data distribution or the
used classifier and hence has high generic applicability to different
datasets and problems. This work extends previous research
\citep{henelius:2014:peek, ojala2010jmlr} on studying attribute
interactions in opaque classifiers.

Knowledge of attribute interactions exploited by classifiers is
important in, e.g., pharmacovigilance and bioinformatics (see
Sec.~\ref{sec:introduction}) where powerful classifiers are used in
data analysis, since they make it possible to simultaneously
investigate multiple attributes instead of, e.g., just pairwise
interactions. Here \astrid allows the practitioner to automatically
discover attribute groupings, providing insight into the data by
making the classifiers more transparent.

\section*{Acknowledgements}
This work was supported by Academy of Finland (decision 288814) and
Tekes (Revolution of Knowledge Work project).

\bibliography{structure_identification}
\bibliographystyle{icml2017}

\end{document}